\algnewcommand\algorithmicinput{\textbf{Input:}}
\algnewcommand\algorithmicoutput{\textbf{Output:}}
\algnewcommand\Input{\item[\algorithmicinput]}
\algnewcommand\Output{\item[\algorithmicoutput]}
\DeclareMathOperator*{\fr}{fr}
\newtheorem{prob}{Problem}
\newcommand{\pr}[1]{\left(#1\right)}
\newcommand{\abs}[1]{\left|#1\right|}
\newcommand{\fpr}[1]{\mathopen{}\left(#1\right)}
\newcommand{\enset}[2]{\left\{#1 ,\ldots , #2\right\}}
\newcommand{\freq}[1]{\fr\fpr{#1}}
\newcommand{\prname}[1]{\textsc{#1}}
\newcommand{\dtname}[1]{\emph{#1}}
\newcommand{\fnp}{\textbf{FNP}}
\newcommand{\np}{\textbf{NP}\xspace}
\newcommand{\rp}{\textbf{RP}\xspace}
\newtheorem{theorem}{Theorem}
\newtheorem{lemma}[theorem]{Lemma}
\begin{document}

\conferenceinfo{KDD'09,} {June 28--July 1, 2009, Paris, France.} 
\CopyrightYear{2009}
\crdata{978-1-60558-495-9/09/06} 



\title{Tell Me Something I Don't Know:\\Randomization Strategies for Iterative Data Mining}

\numberofauthors{6}

\author{
Sami Hanhij\"arvi, Markus Ojala, Niko Vuokko, Kai Puolam\"aki, Nikolaj Tatti, Heikki Mannila\\
       \affaddr{Helsinki Institute for Information Technology HIIT}\\
	   \affaddr{Department of Information and Computer Science}\\
	   \affaddr{Helsinki University of Technology, Finland} \\
       \email{fistname.lastname@tkk.fi}
}

\maketitle


\begin{abstract} 
There is a wide variety of data mining methods available, and it is generally useful in exploratory data analysis to use many different methods for the same dataset. This, however, leads to the problem of whether the results found by one method are a reflection of the phenomenon shown by the results of another method, or whether the results depict in some sense unrelated properties of the data. For example, using clustering can give indication of a clear cluster structure, and computing correlations between variables can show that there are many significant correlations in the data. However, it can be the case that the correlations are actually determined by the cluster structure.

In this paper, we consider the problem of randomizing data so that previously discovered patterns or models are taken into account. The randomization methods can be used in iterative data mining. At each step in the data mining process, the randomization produces random samples from the set of data matrices satisfying the already discovered patterns or models. That is, given a data set and some statistics (e.g., cluster centers or co-occurrence counts) of the data, the randomization methods sample data sets having similar values of the given statistics as the original data set. We use Metropolis sampling based on local swaps to achieve this. We describe experiments on real data that demonstrate the usefulness of our approach. Our results indicate that in many cases, the results of, e.g., clustering actually imply the results of, say, frequent pattern discovery.

\end{abstract}

\category{H.2.8}{Database management}{Database Applications}[Data mining]
\category{G.3}{Probability and Statistics}{Markov processes}
\terms{Algorithms, experimentation, theory}
\keywords{Statistical significance, matrix randomization, inverse frequent set mining}

\section{Introduction}

The data mining research of the past 15 years has produced a wide collection of algorithms for exploratory data analysis. 

In this paper we consider a simple question. Suppose we have a dataset $D$, and we first run an analysis using algorithm $A_1$ on $D$, obtaining interesting results $A_1(D)$. Then we use another method $A_2$ on $D$, and again obtain fine results $A_2(D)$. How do we know whether the second result is actually just a consequence of the first, or does it somehow increase our information about the data? So does $A_2(D)$ tell us something we don't know, or is it just rephrasing the result that we already saw when $A_1(D)$ was given to us? 

For example, using clustering can give indication of a clear cluster structure, and computing correlations between variables can show that there are many significant correlations in the data. However, it can be the case that the correlations are actually determined by the cluster structure. 

In this paper, we consider the problem of randomizing data so that previously discovered patterns or models are taken into account. The randomization methods can be used in iterative data mining. At each step in the data mining process, the randomization produces random samples from the set of data matrices satisfying the already discovered patterns or models.

\paragraph{Example}
Consider the toy task of analyzing the dataset shown in
Figure~\ref{fig:exmatrix}.
We can first look at the row and column margins (sums of rows and columns) in the data, observing for instance that the column margins vary between 3 and 7 and the row margins between 1 and 6.

\begin{figure}[t]
\begin{centering}
\begin{tabular}{c*{7}{@{\hspace{0.7mm}}c}}
$A$&$B$&$C$&$D$&$E$&$F$&$G$&$H$ \\
\hline
 1 & 1 & 1 & 0 & 0 & 0 & 1 & 1 \\
 1 & 0 & 1 & 0 & 0 & 1 & 0 & 0 \\
 1 & 1 & 1 & 0 & 0 & 0 & 0 & 1 \\
 1 & 1 & 1 & 0 & 1 & 1 & 0 & 1 \\
 0 & 0 & 0 & 1 & 0 & 0 & 0 & 0 \\
 0 & 0 & 1 & 1 & 1 & 1 & 0 & 0 \\
 0 & 0 & 0 & 1 & 1 & 1 & 1 & 0 \\
 0 & 0 & 1 & 1 & 0 & 0 & 0 & 0 \\
 0 & 0 & 1 & 1 & 1 & 0 & 0 & 1 \\
\hline
\end{tabular}
\par\end{centering}
\caption{Example 0--1 dataset\label{fig:exmatrix} } \end{figure}

The next step in the analysis could be to find the frequent itemsets using minimum frequency of, say, 3. The resulting itemsets are
\begin{align*}
A,B,C,D,E,F,H,AB,AC,AH,BC,BH,CD,CE, \\
CF,CH,DE,EF,ABC,ABH,ACH,BCH,ABCH,
\end{align*}
all with frequency 3 expect itemsets $A,E,F,H,AC,CH$ with frequency 4, itemset $D$ with frequency 5 and itemset $C$ with frequency 7.

The result contains multiple itemsets. Thus it is natural to ask which of them are interesting?

A possible method is to use randomization. Given the row and column margins, we can \cite{Cobb03,Gionis07b} generate datasets that have these margins but are otherwise random. Then we can see how often the frequencies of the sets above are higher in the real data than in the generated datasets, i.e., we can compute empirical $p$-values for the frequencies of the dataset.\footnote{As there are several patterns, we should also correct for multiple testing~\cite{Benjamini95,Holm79}, but for simplicity we omit that consideration in the example.}

By preserving row and column margins in randomization, the frequencies of the itemsets \[AB,BH,ABC,ABH,BCH,ABCH\] are found to be statistically significant with significance level $\alpha=0.05$. Thus even given the information about the row and column margins the frequencies of these itemsets are interesting. The corresponding empirical $p$-values are \[0.044, 0.041, 0.023, 0.004, 0.015, 0.003.\] The other empirical $p$-values are notably larger.

Now the question is, are the significances of the itemsets independent of each other? In other words, are the larger itemsets $ABC,ABH,BCH,ABCH$ statistically significant only because the smaller itemsets $AB,BH$ are?

To answer the question, we would like to compute empirical $p$-values in a way that takes the already known information into account. That is, we would like to constrain the sampling of datasets so that each dataset will have the same frequencies for the itemsets $AB,BH$. This effectively forms a new null hypothesis, which states that the dataset is a random dataset with specific row and column margins and frequencies of the itemsets $AB,BH$. Using this randomization and assessing the statistical significances of the itemsets $ABC,ABH,BCH,ABCH$, we discover that none of these itemsets are statistically significant with this new null hypothesis. The corresponding empirical $p$-values are \[0.229, 0.683, 0.222, 0.170.\] We can therefore conclude that in this example, the frequencies of the smaller itemsets $AB,BH$ and the marginal distributions explain the frequencies of the larger itemsets.

We continue analyzing the dataset by clustering the rows into two clusters using $k$-means. The first four rows are found to form the first cluster and the last five rows the second cluster. To assess the significance of the clustering given the row and column information, we compare the original $k$-means clustering error to clustering errors on randomized datasets where the row and column margins are preserved. We obtain a $p$-value of $0.011$ implying that the dataset contains a clustering structure which is independent of the row and column margins.

Now it would be tempting to conclude that the dataset contains some significant itemsets and an interesting clustering structure. But are the frequent itemsets and the clusters independent of each other? If we again preserve the frequencies of the itemsets $AB,BH$ and the row and column margins in randomization, we get an empirical $p$-value of $0.096$ for the clustering structure of the dataset! That is, the cluster structure does not seem that interesting, given also the data about the two itemsets.

We can also do the reverse: We can preserve the clustering structure in addition to the row and column margins in randomization and test the significances of the frequencies of the itemsets $AB,BH,ABC,ABH,BCH,ABCH$. In this case we obtain the empirical $p$-values \[1.000, 0.239, 1.000, 0.239, 0.239, 0.239,\] respectively, thus implying that the clusters and the frequent itemsets depend on each other.

\paragraph{Contents of the paper}
In this paper we consider the approach described in the above example in a general form. We give a general problem statement, show hardness of the randomization problem and give simple approximate algorithms for the task. We describe experiments on real data that demonstrate the usefulness of our approach. Our results indicate that in many cases, the results of, e.g., clustering actually imply the results of, say, frequent pattern discovery.

The rest of this paper is organized as follows. In Section~\ref{sec:prel} we give the basic definitions. The general problem of randomizing datasets given the information of some other analyses is stated in Section~\ref{sec:probdefs}. Section~\ref{sec:compres} gives some complexity results for this problem, showing that it is in most cases NP-hard. The randomization algorithms based on Metropolis techniques are given in Section~\ref{sec:algs}, while the empirical results are described in Section~\ref{sec:experiments}. Section~\ref{sec:conclusions} is a short conclusion.

\paragraph{Related work}
Randomization is a widely used method in statistics~\cite{Good00, Westfall93}. The main benefit is that the user is releaved from the often difficult, and sometimes impossible, task of defining an analytical distribution for the test statistic. It is sometimes easier to devise a way of sampling from the null hypothesis than to actually define it analytically. And integrating over the analytical distribution, which is needed for the $p$-value calculation, may not be straightforward. MCMC methods are constructed to this purpose, which are also methods for randomization.

\looseness-1 
Based on a wide body of work in ecology (see \cite{Cobb03}), \cite{Gionis07b} discusses the randomization of binary matrices when the size of the matrix as well as the column and row margins are to be maintained. The authors use a Markov chain using local swaps that respect the marginal distributions. The paper~\cite{Ojala08sdm} discusses the same idea for real matrices, where the marginal distributions are no longer single integers for rows or columns, but distributions. They use the Metropolis-Hastings method to define transition probabilities for their local changes to maintain the desired marginal distributions. We take influence from these papers to produce our methods.

Our concepts are closely related to the concept on non-derivability~\cite{Calders07}. An itemset $X$ is derivable if we can reduce the exact support of $X$ from the supports of its sub-itemsets. For example, if the support of $A$ is $0$, then the support of $AB$ must also be $0$. This means that if we preserve the supports of the sub-itemsets, then the support of $X$ will be constant, thus its $p$-value will be 1. Hence, our method can be seen as a generalization of non-derivability in which we can still remove insignificant $X$ even though we do not know the exact support of $X$. 

A closely related idea for iterative knowledge discovery is presented in~\cite{Jensen91}, where randomization is used in model selection to test if a candidate model is significantly better than the current model. The current model is replaced by a significantly better candidate model and the process is repeated, effectively carrying out model selection iteratively.

The pattern ordering problem considered in \cite{Mielikainen03} tries to order a collection of patterns so that each pattern gives as much information about the data as possible, given the earlier patterns. However, in that approach there is no consideration of randomization or significance, and the approach is only applicable to simple patterns. A related idea for closed itemsets, where statistical significance is used to order the patterns, is presented in~\cite{Gallo07}.
For another type of approach to significance of patterns, see \cite{Jaroszewicz08,Webb07,Webb08}.

\section{Preliminaries} \label{sec:prel}

In this section, we describe how randomization approach is used in significance testing, give the definition of empirical $p$-values and discuss the method by Besag and Clifford for calculating MCMC $p$-values. First, however, we introduce the notation used in the paper. 

\subsection{Notation}

Let $D$ be a 0--1 dataset with $m$ rows and $n$ columns. The approach we describe is not limited to 0--1 data, but for simplicity we consider just such data in the paper. The rows of $D$ correspond to \emph{transactions} and the columns to \emph{attributes}. The notation $D_{tx}$ refers to the element at row $t$ and column $x$ in the matrix $D$. An \emph{itemset} $X$ is a subset of the attributes, i.e., $X\subset\{1,\ldots,n\}$. A transaction $t$ \emph{covers} an itemset $X$ if $D_{tx}=1$ for all $x\in X$. The \emph{frequency} of an itemset $X$ in the dataset $D$ is the number of transactions $t$ that cover the itemset $X$, denoted by $\fr(D;X)$. A \emph{family of itemsets} $\mathcal{F}$ is a set of itemsets, $\mathcal{F}=\{X_1,\ldots,X_k\}$. A \emph{clustering} $\mathcal{C}$ of the rows of $D$ is a partition of the set $\{1,\ldots,m\}$. The row and column \emph{margins} of a dataset $D$ are the row and column sums of $D$.

\subsection{Randomization Approach}

Consider a 0--1 dataset $D$ with $m$ rows and $n$ columns. Assume that some data mining task, such as frequent itemset mining, is performed on $D$. Let $\mathcal{S}(D)$ be the result of the data mining task. We assume that the result $\mathcal{S}(D)$ can be described with a single number, and we call such a result the \emph{structural measure} of $D$. It can be, e.g., the frequency of a given itemset, the number of frequent itemsets or the clustering error of the dataset. Any measure can be used as long as larger (or smaller) values mean stronger presence of the structure.

To assess whether the result $\mathcal{S}(D)$ is explained by certain characteristics of the original dataset, we generate random $m\times n$ sized 0--1 datasets $\hat{D}$, which share the given characteristics with $D$, and compare the original result $\mathcal{S}(D)$ with the results $\mathcal{S}(\hat{D})$ on the randomized datasets. We can, e.g., preserve the row and column margins in randomization and assess the number of frequent itemsets. 

\subsection{Empirical p-Values}
Let $\hat{\mathcal{D}}=\{\hat{D}_1,\ldots,\hat{D}_k\}$ be a collection of independent randomized versions of the original dataset $D$. The one-tailed \emph{empirical $p$-value} of $\mathcal{S}(D)$ for $\mathcal{S}(D)$ being large is 
\begin{equation}
\frac{|\{\hat{D}\in\hat{\mathcal{D}}\ |\ \mathcal{S}(\hat{D}) \geq \mathcal{S}(D)\}|+1}{k+1},
\end{equation}
This gives the fraction of randomized datasets whose structural measure is larger than the original $\mathcal{S}(D)$. The one-tailed empirical $p$-value when small values of $\mathcal{S}(D)$ are interesting, and the two-tailed empirical $p$-value are defined similarly. If the obtained $p$-value, adjusted for multiplicity if needed, is less than a given threshold $\alpha$, say, $\alpha=0.05$, we can regard the result to be independent of the characteristics preserved in randomization.

\subsection{MCMC p-Values}

We will use Markov chain Monte Carlo methods to produce the randomized datasets. The samples produced by Markov chains are generally not independent thus breaking the validity of the empirical $p$-value. However, we will use the approach by Besag and Clifford~\cite{Besag89} to guarantee the \emph{exchangeability} of the samples. In the approach the chain is started from $D$ and run backwards for $K$ steps to produce a new starting state $\hat{D}_0$. Each randomized dataset $\hat{D}$ is produced by starting a new chain from $\hat{D}_0$ and running the chain $K$ steps forwards. This produces an exchangeable set of samples $\{D,\hat{D}_1,\ldots,\hat{D}_k\}$ which ensures the validity of the empirical $p$-value regardless of the independence. If the samples are dependent, we obtain just more conservative $p$-values, see~\cite{Besag04,Besag89} for more details. Our methods turn out to be time-reversible, i.e., running the chain backwards is the same as running the chain forwards. 

\section{Problem Statement} \label{sec:probdefs}

In this section we formulate the general 
specific randomization problems discussed in the introduction. 

\subsection{General Problem Statements}
In randomization we want to preserve certain characteristics of the original 0--1 dataset $D$, e.g., the row and column margins. Let $f(D)$ be a \emph{statistics function} which calculates these statistics of the dataset $D$. To measure the similarity between two datasets $D$ and $\hat{D}$ in the corresponding statistics $f(D)$ and $f(\hat{D})$, we define a \emph{difference measure} $h(f(D),f(\hat{D}))$ which is a positive function between two sets of statistics $f(D)$ and $f(\hat{D})$, where $D$ and $\hat{D}$ are 0--1 datasets with the same size. Any difference measure can be used as long as a smaller difference means that the statistics $f(D)$ and $f(\hat{D})$ are closer to each other and a zero difference that the statistics are equal.

We introduce two general randomization problems. In the first problem the statistics are preserved exactly where as in the second problem datasets with small difference in the statistics are sampled with high probability. The problem statements are as follows.
\begin{prob}[ExactRand]\label{prob:exactrand}
Given a 0--1 dataset $D$ and a statistics function $f$, generate a dataset $\hat{D}$ chosen independently and uniformly from the set of 0--1 datasets having the same size and the same statistics as $D$, i.e., $f(D)=f(\hat{D})$.
\end{prob}
\begin{prob}[SoftRand]\label{prob:softrand}
Given a 0--1 dataset $D$, a statistics function $f$, a difference measure $h(f(D),f(\hat{D}))$ and a scaling constant $w>0$, generate a dataset $\hat{D}$ chosen with a probability \[p\propto\exp\{-w h(f(D),f(\hat{D}))\}\] from all 0--1 datasets having the same size as $D$. 
\end{prob}
Note that when $w=\infty$, \prname{SoftRand} reduces to \prname{ExactRand}.

\subsection{Specific Problem Statements} \label{sec:probdefs:spec}
Next we give the problem statements of the three specific randomization tasks discussed in the introduction. The problems \prname{Margins}, \prname{ClusterMargins} and \prname{ItemsetMargins} are examples of the problem \prname{ExactRand}.
\begin{prob}[Margins] \label{prob:swap}
Given a 0--1 dataset $D$, generate a dataset $\hat{D}$ chosen independently and uniformly from the set of 0--1 datasets having the same row and column margins as the dataset $D$.
\end{prob}
\begin{prob}[ClusterMargins] \label{prob:cluster-swap}
Given a 0--1 dataset $D$ and a clustering $\mathcal{C}$ of the rows of $D$, generate a dataset $\hat{D}$ chosen independently and uniformly from the set of 0--1 datasets having the same row and column margins as well as the same cluster centers and variances for each cluster in $\mathcal{C}$ as the dataset $D$.
\end{prob}
\begin{prob}[ItemsetMargins] \label{prob:itemset-swap-hard}
Given a 0--1 dataset $D$ and a family of itemsets $\mathcal{F}$, generate a dataset $\hat{D}$ chosen independently and uniformly from the set of 0--1 datasets having the same row and column margins as well as the same frequencies for the itemsets in $\mathcal{F}$ as the dataset $D$.
\end{prob}

It turns out that the problem \prname{ItemsetMargins} is 
much harder than \prname{Margins} and \prname{ClusterMargins}, see Section~\ref{sec:compres}. Thus we introduce \prname{SoftRand} version of \prname{ItemsetMargins}. Let $D$ be the original dataset, $\hat{D}$ a randomized dataset and $\mathcal{F}$ a family of itemsets which we are trying to preserve in randomization. We define the difference in the itemset frequencies of $\mathcal{F}$ between the datasets $D$ and $\hat{D}$ as
\begin{equation} \label{eq:diff-itemset-freqs}
h_{\mathcal{F}}(D,\hat{D}) = \sum_{X\in\mathcal{F}} |\fr(D;X)-\fr(\hat{D};X)|, 
\end{equation}
where we have combined the statistics function directly into the difference measure. 
\begin{prob}[ItemsetMarginsSoft] \label{prob:itemset-swap-soft}
Given a binary dataset $D$, a family of itemsets $\mathcal{F}$ and a scaling constant $w>0$, generate a dataset $\hat{D}$ chosen with a probability \[p\propto\exp\{-w h_{\mathcal{F}}(D,\hat{D})\}\] from the set of 0--1 datasets having the same row and column margins as the dataset $D$.
\end{prob}

\section{Complexity Results} \label{sec:compres}

In this section we prove that the \prname{ItemsetMargins} variant defined in Section~\ref{sec:probdefs:spec} is intractable in general case. We will do this by reducing the \prname{HamiltonCycle} to \prname{ItemsetMargins}. This negative result shows the need for \prname{ItemsetMarginsSoft} where we allow some variation in the frequencies.

\begin{theorem}
Assume that there is a random polynomial algorithm for \prname{ItemsetMargins}.
Then $\rp = \np$ even if the algorithm is provided with an example of such
dataset.
\label{thr:complexity}
\end{theorem}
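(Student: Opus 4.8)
The plan is to establish the theorem through a polynomial-time reduction from \prname{HamiltonCycle} to \prname{ItemsetMargins}, arguing that a hypothetical randomized polynomial-time uniform sampler for \prname{ItemsetMargins} would place \prname{HamiltonCycle}, and hence all of \np, inside \rp. Since \rp $\subseteq$ \np always holds, this yields \rp = \np. The whole argument hinges on a single structural property I would engineer into the reduction: given a graph $G$, I would build a dataset $D$ and a family $\mathcal{F}$ so that the set of 0--1 matrices sharing the row margins, column margins, and the frequencies of $\mathcal{F}$ with $D$ is exactly $\{D\}$ when $G$ has no Hamilton cycle, and strictly larger (containing at least one matrix besides $D$) when $G$ does. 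The dataset $D$ is itself a member of this solution set, which is precisely the ``example of such dataset'' the theorem permits us to hand to the algorithm.

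Granting such a reduction, the \rp procedure for \prname{HamiltonCycle} is immediate. I would run the assumed sampler once on $(D,\mathcal{F})$ and accept if and only if the returned matrix differs from $D$. If $G$ has no Hamilton cycle the solution set is $\{D\}$, so the sampler must return $D$ and we never accept, giving the required one-sided error. If $G$ has a Hamilton cycle the solution set contains $r\ge 1$ matrices in addition to $D$, and because the sampling is uniform the returned matrix differs from $D$ with probability $r/(r+1)\ge 1/2$, so we accept with probability at least one half. Standard repetition boosts this to $1-2^{-\text{poly}}$; and if the sampler is only approximately uniform or is allowed to fail, I would either verify the returned matrix explicitly or arrange the construction to admit many alternative solutions, so that the one-sided guarantee survives in either case.

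The core of the work is the construction of $(D,\mathcal{F})$. I would let the row and column margins pin the free part of the matrix down to a rigid combinatorial family: for a graph on $k$ vertices, a block whose admissible completions under the fixed margins are permutation matrices on the vertex set, so that every solution encodes a functional successor map on $V(G)$. The frequencies in $\mathcal{F}$ then play two roles. First, pairwise itemset frequencies set to zero forbid a $1$ in any position corresponding to a non-edge of $G$, restricting the successor maps to those that follow edges, i.e.\ to cycle covers of $G$. Second, to isolate Hamiltonian solutions I would add a position-indexed (time-expanded) block in which a row represents the $i$-th step of a tour, and auxiliary co-occurrence frequencies force consecutive steps to meet at a shared vertex and force each vertex to be used exactly once; this lets the preserved statistics certify that the encoded successor map is a single $k$-cycle rather than a union of shorter cycles. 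Finally I would construct the base dataset $D$ to realize exactly this margin-and-frequency signature through a deliberately non-Hamiltonian, degenerate configuration, so that $D$ is always a legal solution while Hamilton cycles supply the only genuinely different ones.

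The main obstacle is this second role of $\mathcal{F}$: Hamiltonicity is a global connectivity property, whereas itemset frequencies are local co-occurrence counts, so forcing ``one cycle, not several'' using only preserved frequencies is the delicate heart of the encoding. Getting the position-indexed block to rule out short subtours without accidentally eliminating legitimate solutions or creating spurious ones, while keeping every constraint exactly preservable and keeping the signature of $D$ identical to that of the Hamiltonian solutions, is where the real engineering lies. A secondary technical point to discharge is that the whole transformation, including writing down $D$ and all the frequencies in $\mathcal{F}$, runs in time polynomial in the size of $G$, so that the combined pipeline stays within \rp.
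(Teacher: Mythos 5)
Your high-level strategy---turn the hypothetical uniform sampler into a one-sided-error test by checking whether the returned matrix differs from a known solution---is sound, and your probability analysis is correct \emph{granted} the construction you postulate. The genuine gap is that this construction, the single object on which the whole proof rests, is never produced, and it is precisely the part that cannot be waved through. You need a polynomial-time computable pair $(D,\mathcal{F})$ such that the set of matrices sharing $D$'s margins and $\mathcal{F}$-frequencies equals exactly $\set{D}$ when $G$ is non-Hamiltonian and is strictly larger otherwise. Note the built-in tension: any constraint system strong enough to force ``every solution encodes a Hamiltonian cycle'' (the property one naturally engineers, and what the paper's Lemma~\ref{lem:map} achieves with its six attribute groups) automatically \emph{forbids} the existence of your degenerate, non-Hamiltonian solution $D$, since $D$ itself is a solution. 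So you must weaken the constraints enough to let $D$ in, and then prove that nothing else slips in besides Hamiltonian encodings. You offer no argument for this uniqueness, and there are concrete obstructions: column margins and itemset frequencies are invariant under row permutations, so the solution set is closed under every permutation preserving the row-margin vector, and your degenerate $D$ would generically drag in many further non-Hamiltonian solutions (its own row permutations, and any swap of $D$ that happens to preserve all constrained frequencies), destroying the ``solution set $=\set{D}$'' direction on which your one-sided error rests.

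The paper avoids this difficulty entirely, and that is the idea your proposal is missing: it reduces not from \prname{HamiltonCycle} but from \prname{SecondHamilton} (given a Hamiltonian graph \emph{and} a Hamiltonian cycle, is there a second, different cycle?), which is also \np-complete. There the ``example dataset'' required by the theorem comes for free---it is the encoding of the given cycle---so the construction only has to guarantee that datasets satisfying the constraints correspond to Hamiltonian cycles (Lemma~\ref{lem:map}), together with a counting lemma (Lemma~\ref{lem:const}) showing that every cycle accounts for the same number of datasets, so that under uniform sampling a second cycle, if one exists, is returned with probability at least $1/2$. In other words, the choice of starting problem is what discharges the ``example'' requirement; insisting on plain \prname{HamiltonCycle} forces you to manufacture an example out of thin air, which is exactly the step you have left open.
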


A random polynomial algorithm is 
a Turing machine that is bound\-ed by a polynomial time with an oracle producing random
independent numbers. A language is said to be in \rp if there is a
non-deterministic Turing machine such that, given a 'yes'-instance, at least
half of the computational paths end up with 'yes'. It is easy to see that $\rp
\subseteq \np$, and the usual conjecture is that $\rp \neq \np$.

Our reduction is based on \emph{Hamiltonian cycles}. A Hamiltonian cycle is a
cycle in a graph such that every node of the graph is visited only once.
Alternatively we can see the cycle as a permutation of the nodes such that
adjacent nodes (including the first and the last nodes) are connected. The
problem \prname{HamiltonCycle} is \fnp-complete~\cite{Papadimitriou94}.

To prove the main result we will need a couple of lemmae. The first lemma
states that we can connect Hamiltonian cycles and the datasets satisfying
some specific constraints.

\begin{lemma}
Assume that we are given a graph $G$. There are column and row margins, itemset frequency constraints, and a function $m$ that maps a dataset satisfying the constraints into a Hamiltonian cycle of $G$.
\label{lem:map}
\end{lemma}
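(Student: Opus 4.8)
The plan is to build, from the given graph $G=(V,E)$ with $V=\{1,\dots,n\}$, a fixed collection of row margins, column margins and zero-frequency itemset constraints whose only satisfying $0$--$1$ datasets encode an ordering of the vertices along a single cycle using edges of $G$; the decoding map $m$ then simply reads this ordering back off as a Hamiltonian cycle. I would fix the skeleton of the cycle by identifying the rows of $\hat D$ with the $n$ consecutive positions of a cyclic sequence $v_1,\dots,v_n,v_1$, and reserve a block of $n$ columns (the ``current node'') together with a block of $n$ columns (the ``successor node''), so that position $i$ records the pair $(v_i,v_{i+1})$. Setting every row margin to $2$ and every column margin in each block to $1$ forces each vertex to occur exactly once as a current node and exactly once as a successor, i.e. the current-node block and the successor-node block are each permutation matrices, so that $m$ always sees a bijection between positions and vertices.

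Next, to forbid non-adjacent transitions I would impose the itemset frequency constraints $\fr(\hat D;\{a,n+b\})=0$ for every non-edge $\{a,b\}\notin E$, where $a$ lies in the current-node block and $n+b$ in the successor-node block. Since the frequency of an itemset counts exactly the rows that cover it, a zero constraint is precisely a forbidden pattern, so no position may pair a vertex with a non-neighbour and every consecutive pair $(v_i,v_{i+1})$ must be an edge of $G$. Defining $m(\hat D)=(v_1,\dots,v_n)$ by reading the current-node block in position order then yields a sequence that visits every vertex once and traverses only edges of $G$, which is the required Hamiltonian cycle.

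The hard part, and where I expect the real work to lie, is forcing the two permutation blocks to be consistent across rows --- that the successor recorded in position $i$ equals the current node recorded in position $i+1$ --- because without this the constraints are also satisfied by an arbitrary $2$-factor of $G$, a disjoint union of vertex-covering cycles, and only a single cycle decodes to a genuine Hamiltonian cycle. Local, per-row statistics such as margins and itemset frequencies cannot directly test an equality between two distinct rows, so I would augment each row with position-indicator columns for $i$ and $i+1$ and add a polynomial family of linking itemset constraints that tie the (position $i+1$, successor) pattern of one row to the (position $i+1$, current) pattern of the next, thereby pinning the sequence into a single consistent traversal.

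Finally, the remaining obligations are routine. I would check that all prescribed row margins, column margins and itemset frequencies are simultaneously realisable and mutually consistent (in particular that a dataset meeting them exists whenever $G$ is Hamiltonian, which supplies the example dataset the theorem allows us to hand the algorithm), confirm that the number of rows, columns and constrained itemsets is polynomial in $|G|$, and verify that $m$ is well defined and outputs a Hamiltonian cycle on every dataset meeting the constraints.
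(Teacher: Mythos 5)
There is a genuine gap, and it sits exactly where you placed your hand-wave. Your core construction (a ``current'' permutation block, a ``successor'' permutation block, and zero-frequency constraints $\fr(\{a,n+b\})=0$ for non-edges) is satisfied by \emph{every} 2-factor of $G$: any permutation $\pi$ of the vertices with $\{v,\pi(v)\}\in E$ for all $v$ yields a dataset meeting all margins and itemset constraints, and when $\pi$ decomposes into several cycles there is no Hamiltonian cycle for $m$ to output, so the lemma is false for your constraint set. You correctly identify this, but the ``polynomial family of linking itemset constraints'' that is supposed to tie row $i$'s successor to row $i+1$'s current node is precisely the hard content of the lemma, and your sketch never says how such constraints could exist. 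Indeed, your own (correct) observation --- that itemset frequencies are global row counts and cannot test an equality between two distinct rows --- is exactly the obstruction; asserting that some family of constraints ``pins the sequence into a single consistent traversal'' asserts the conclusion rather than proving it. A smaller but real error: row margin $2$ together with unit column margins does \emph{not} force your two blocks to be permutation matrices, since a row may place both of its 1s in the same block; you also need pairwise zero constraints $\fr(\{u_a,u_b\})=0$ and $\fr(\{w_a,w_b\})=0$ inside each block, which is how the paper handles its analogous groups.

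For comparison, the paper's proof sidesteps the successor block entirely and then solves the cross-row problem with an explicit gadget. Each row carries one position attribute (group $O$) and one node attribute (group $V$), both blocks forced to be permutation matrices by $\fr(o_i)=1$, $\fr(o_io_j)=0$ (and likewise for $V$); the dataset thus encodes a single permutation $\sigma$ from positions to nodes, so consistency across rows is automatic and only adjacency of $\sigma(i)$ and $\sigma(i+1)$ must be enforced. That cross-row condition is expressed through OR-columns: $b_i$ with $\fr(b_i)=2$, $\fr(b_io_i)=\fr(b_io_{i+1})=1$ marks exactly the two rows holding positions $i$ and $i+1$, and $c_{jk}$ with $\fr(c_{jk})=2$, $\fr(c_{jk}v_j)=\fr(c_{jk}v_k)=1$ marks the rows holding nodes $j$ and $k$; a bad transition is then exactly the event that $b_i$ and $c_{jk}$ coincide as columns, i.e.\ $\fr(b_ic_{jk})=2$. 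Since only equality constraints are available, the inequality $\fr(b_ic_j)<2$ for non-edge pairs is simulated by auxiliary attributes $s_{ij}$ with $\fr(s_{ij})=1$, $\fr(s_{ij}b_i)=1$, $\fr(s_{ij}c_j)=0$, and finally negation columns $e_i$ (with $\fr(e_i)=M-1$, $\fr(s_ie_i)=0$) equalize the row margins, which the $S$ group would otherwise break. Any completed version of your proof would need machinery of exactly this kind for the linking step, so what you deferred as the ``hard part'' is not routine --- it is the theorem.
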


\begin{proof}
Assume that we are given a graph $G$ with $M$ nodes and $N$ edges. Our goal is
to construct appropriate constraints. At first, we will focus only on
itemsets and column margins. The row margins will be discussed later. In our
construction we will have $6$ different attribute groups.

\looseness-1
The first attribute group $O = \enset{o_1}{o_M}$ contains $M$ attributes.  We
impose the frequencies $\freq{o_i} = 1$, $\freq{o_io_j} = 0$ for $i, j = 1,
\ldots, M$, $i \neq j$. These frequencies will force that for each $o_i$
there is a row on which the attribute value is $1$ and that there are no other
active $o_j$ on that row, i.e., 
if we stack the rows into a matrix
we will have a permutation matrix.

The second group $V = \enset{v_1}{v_M}$ is similar to the first,
that is, we have $\freq{v_i} = 1$ and $\freq{v_iv_j} = 0$. This construction
gives us also a permutation matrix. Also note that for each $o_i$ there is a
\emph{unique} $v_j$ such that there is a row in which both $o_i$ and $v_j$ are
present. This allows us to define a permutation of $\sigma(i) = j$. The main
idea is that $\sigma$ represents the Hamiltonian cycle. This can be achieved if
we can force that the nodes in $G$ corresponding to $\sigma(i)$ and $\sigma(i +
1)$ are connected. We will achieve this with the rest of the attributes.

The third group $B = \enset{b_1}{b_M}$ contains the attributes
satisfying $b_i = o_i \lor o_{i + 1}$. This is done by imposing the
itemsets $\freq{b_i} = 2$, $\freq{b_io_i} = 1$, and $\freq{b_io_{i + 1}} = 1$
for $i = 1, \ldots, M - 1$ and $\freq{b_M} = 2$, $\freq{b_Mo_M} = 1$, and
$\freq{b_Mo_1} = 1$. We see that the attribute $b_i$ is present on two rows
and the rows are precisely those in $o_i$ and $o_{i + 1}$ are present.

Our fourth group of items resembles greatly the third group. We denote the
group by $C = \enset{c_1}{c_{M(M-1)/2}}$. The group contains $M(M-1)/2$ attributes,
where each $c_i$ correspond to a pair of nodes $(j, k)$ in the graph $G$. We
define the frequencies such that $c_i$ is exactly $v_j \lor v_k$.  This is done
by setting $\freq{c_i} = 2$ and $\freq{c_iv_j} = \freq{c_iv_k} = 1$.

Now let us consider what happens if $\sigma(i)$ and $\sigma(i + 1)$ are not
connected in $G$. There is a $c_j$ corresponding to the node pair
$\pr{\sigma(i), \sigma(i + 1)}$ such that the columns $b_i$ and $c_j$ are
equivalent. In other words, we have $\freq{b_ic_j} = 2$. Thus, to guarantee
that $\sigma$ is indeed a valid Hamiltonian cycle we need to make sure that for
all $b_i$ and $c_j$ corresponding to unconnected pairs in $G$ we have
$\freq{b_ic_j} < 2$.

Since we do not have inequality constraints in our setup we will have to
simulate it. Let $L = M(M-1)/2 - N$ be the number of pairs of unconnected
nodes.  We define the fifth set of attributes, denoted by $S =
\enset{s_1}{s_{LM}}$, to contain $LM$ attributes, one $s_{ij}$ for each pair
$\pr{b_i, c_j}$, where $c_j$ represent a pair of unconnected nodes. The needed
inequality constraint is simulated by setting $\freq{s_{ij}} = 1$,
$\freq{s_{ij}b_i} = 1$, and $\freq{s_{ij}c_j} = 0$. This construction forces
$b_i$ and $c_j$ to be unequal so that the frequency $\freq{b_ic_j} \neq
2$.

We see now that using the defined itemsets and column margins from all $5$
groups forces the permutation $\sigma$ to be a valid Hamiltonian cycle.

Let us now turn the attention to the row margins. The number of 1s in a single row is
as follows: one 1 from group $O$, one 1 from group $V$, two 1s from group $B$
and $M - 1$ 1s from group $C$. Group $S$ is problematic since the number of $1$
varies per row. We remedy this by defining the sixth group $E =
\enset{e_1}{e_{LM}}$. This group contains $LM$ attributes. We set $e_i$ to be
the negation of $s_i$. We achieve this by setting $\freq{e_i} = M - 1$ and
$\freq{s_ie_i} = 0$. Now the common number of 1s on single row in group $S$ and
$E$ is $LM$. Hence by setting all the row margins to $1 + 1 + 2 + M - 1 + LM$
we have created a set of constraints such that a dataset satisfying the
constraints corresponds to a Hamiltonian cycle.
\end{proof}

Our second lemma states that for each Hamiltonian cycle in a given graph there
is exactly same number of datasets satisfying the constraints.

\begin{lemma}
Given a graph $G$ and a Hamiltonian cycle $H$. Let $m$ be the map
given in Lemma~\ref{lem:map}. Define $\mathcal{X} = \{D; m(D)$ $= H\}$
to be the datasets which $m$ maps into $H$. Then $\abs{\mathcal{X}}$
is a constant not depending on the Hamiltonian cycle $H$.
\label{lem:const}
\end{lemma}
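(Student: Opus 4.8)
The plan is to show that $\abs{\mathcal{X}}$ factorizes into a product of counts, each depending only on $M$ and $N$ and not on the particular cycle $H$. First I would argue that a dataset $D\in\mathcal{X}$ is completely specified by two independent pieces of data: (i) a bijection $\pi$ assigning the $M$ rows to the attributes of the group $O$ (equivalently, an ordering of the rows), and (ii) for each $s_{ij}\in S$, a choice of the unique row carrying its single $1$. Once the $O$-columns are placed, the permutation $\sigma$ (hence the cycle $H=m(D)$) forces the $V$-columns via $\tau=\sigma\circ\pi$; the groups $B$ and $C$ are then determined as the prescribed disjunctions of $O$- and $V$-columns, and $E$ is determined as the complement of $S$. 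The constraints coupling $S$ to the rest, namely $\freq{s_{ij}}=1$, $\freq{s_{ij}b_i}=1$ and $\freq{s_{ij}c_j}=0$, involve each $s_{ij}$ in isolation, so the placements of distinct $S$-columns are independent.

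Next I would count each piece. Part (i) contributes a factor $M!$, since every row ordering preserves all column margins, row margins and itemset frequencies and leaves $\sigma$ (hence $H$) unchanged. For part (ii), the admissible rows for $s_{ij}$ are exactly those on which $b_i=1$ and $c_j=0$; as $b_i$ and $c_j$ are each active on exactly two rows, their number is $2-\abs{\set{\sigma(i),\sigma(i+1)}\cap P_j}$, where $P_j$ is the (unconnected) node pair indexing $c_j$. Since $\set{\sigma(i),\sigma(i+1)}$ is an edge of $H$ and $P_j$ is a non-edge, this intersection has size $0$ or $1$, so the factor is $2$ or $1$. Hence the whole of $S$ contributes $2^{N_0}$, where $N_0$ is the number of pairs $(i,j)$ for which the edge $\set{\sigma(i),\sigma(i+1)}$ is disjoint from $P_j$.

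The crux is the third step: showing that $N_0$ does not depend on $H$. A priori it looks graph-dependent, since for a fixed edge the count of disjoint non-edges varies with the local structure of $G$. The resolution is a double count: I would rewrite $N_0=\sum_{e\in H}\abs{\set{j : P_j\cap e=\emptyset}}$ as $\sum_{j}\abs{\set{e\in H : e\cap P_j=\emptyset}}$ and observe that, for any non-edge $P_j=\set{x,y}$, the cycle $H$ contributes exactly four distinct edges incident to $\set{x,y}$ (two at $x$ and two at $y$, all distinct because $H$ contains no edge $\set{x,y}$). Thus every $P_j$ is disjoint from exactly $M-4$ of the $M$ edges of $H$, giving $N_0=L(M-4)$ independently of $H$.

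Combining the factors yields $\abs{\mathcal{X}}=M!\cdot 2^{L(M-4)}$ (up to an extra dihedral factor $2M$ if $m$ returns the cycle as a subgraph rather than as the permutation $\sigma$, since each edge set arises from $2M$ permutations), a constant depending only on $M$ and $L=M(M-1)/2-N$. The main obstacle is precisely the constancy of $N_0$: the per-edge counts are genuinely non-uniform, and uniformity emerges only after the double count turns the sum over cycle edges into a sum over non-edges, each of which meets a fixed number $M-4$ of cycle edges by the degree-$2$ property of Hamiltonian cycles.
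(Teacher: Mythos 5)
Your proof is correct and takes essentially the same route as the paper's: the same factorization into $M!$ row permutations, the $V$-configurations (the dihedral factor $2M$), the fully determined groups $B$, $C$, $E$, and the $2^{(M-4)L}$ independent choices for the columns of $S$, with the key step in both being that each non-edge $P_j$ misses exactly $M-4$ of the cycle's edges because the four cycle edges incident to its two endpoints are distinct. The only differences are presentational: you spell out the double count over non-edges more explicitly, and you treat the $2M$ factor as optional depending on whether $m$ returns a permutation or an edge set, whereas the paper includes it and arrives at the constant $M!\,2M\,2^{(M-4)L}$.
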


\begin{proof}
To prove the lemma first note that a dataset satisfying the constraints has $M$
\emph{unique} rows. Thus there are $M!$ datasets 
obtained by 
permuting the rows. Assume now that the group $O$ has the shape of the identity
matrix. There are exactly $2M$ different permutations for a given Hamiltonian
cycle, i.e., 
exactly $2M$ different configurations for $V$. Note
that the groups $B$ and $C$ are determined completely by the groups $O$ and $V$
and that the group $E$ is determined by the group $S$. The theorem is proved if
we can show that there are a constant number of configurations for $S$.

Note that there are $2^U$ configurations for $S$ where $U$ is the number of
$s_{ij}$ for which $\freq{b_ic_j} = 0$. The frequency $\freq{b_ic_j}$ is $0$
when both the $\sigma(i)$ and $\sigma(i + 1)$ do not contain the nodes
corresponding to the $c_j$. For a fixed $j$ there are $M - 4$ of such $s_{ij}$.
Hence we have $U = (M - 4)L$.

Combining the numbers together we have that for a fixed Hamiltonian cycle we
have exactly $M!2M2^{(M - 4)L}$ datasets which is a constant. This proves the theorem.
\end{proof}

\begin{proof}[of Theorem~\ref{thr:complexity}]
We start the proof by considering a related \np-complete problem called
\prname{SecondHamilton}~\cite{Papadimitriou94}. In this problem we are given a Hamiltonian
graph, a Hamiltonian cycle, and we are asked if there is a second Hamiltonian
cycle different from the given one.

Now let us consider a problem \prname{RandomHamilton} having the same input as
\prname{SecondHamilton} in which we are asked to
give a random Hamiltonian cycle of the graph. Assume that we have a random
polynomial algorithm $H$ for this problem. Now consider the problem of
\prname{SecondHamilton}.  We can replace $H$ with a non-deterministic machine
by considering all the possible random outputs of the oracle. Then we compare
the random Hamiltonian cycle returned by $H$ with a given Hamiltonian cycle. If
the cycles differ, then we return 'yes', otherwise 'no. This means that if there
is another cycle in the graph, then at least $1/2$ of the computation paths ends
up with 'yes'. Thus we have shown that \prname{SecondHamilton} is in \rp. But
\prname{SecondHamilton} is \np-complete and this proves that $\np = \rp$.

Now assume that there is a polynomial algorithm for \prname{ItemsetMargins}. Such an algorithm is given a set of constraints and an example dataset satisfying the constraints.  By using the construction given in Lemma~\ref{lem:map} we can use that algorithm for sampling Hamiltonian cycles in polynomial time. The given Hamiltonian cycle can be also easily transformed into a dataset needed for the input of the algorithm.

The only thing we need to show is that this reduction produces Hamiltonian
cycles from the uniform distribution.  Since we have assumed that the datasets
are coming from the uniform distribution, it suffices to prove that there are
same number of datasets for each Hamiltonian cycle in a fixed graph. But this
is exactly the statement of Lemma~\ref{lem:const}.
\end{proof}

\section{Algorithms} \label{sec:algs}

Next we introduce algorithms for solving the problems \prname{Margins}, \prname{ClusterMargins} and \prname{ItemsetMarginsSoft}. First we introduce the method by Gionis \emph{et al.}~\cite{Gionis07b} for solving \prname{Margins}. Our methods for \prname{ClusterMargins} and \prname{ItemsetMarginsSoft} extend the method for solving \prname{Margins}.

\subsection{Preserving Row and Column Margins}

The method for producing random datasets $\hat{D}$ having the same row and column margins as the original dataset $D$ is based on \emph{swaps}. In each swap two rows $s,t$ and two columns $x,y$ are selected such that $D_{sx}=D_{ty}=1$ and $D_{sy}=D_{tx}=0$. In a swap the four elements are swapped as shown in Figure~\ref{fig:swap}. A swap preserves the row and columns sums. A randomized dataset $\hat{D}$ is produced by performing $K$ attempts of swaps. This is given in Algorithm~\ref{alg:swap}. The existence of self-loops guarantees that the stationary distribution is uniform, see~\cite{Gionis07b} for more details.

\begin{figure}
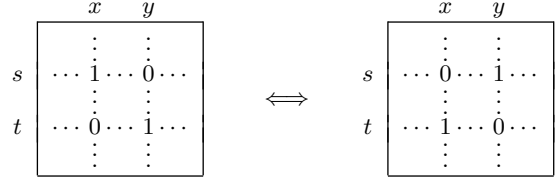

\centering
\hspace{\stretch{1}}
\begin{tabular}{c|*{5}{c@{\hspace{0.5mm}}}|}
\multicolumn{2}{c}{} & $x$ && $y$ & \multicolumn{1}{c}{} \\
\cline{2-6}
 &&$\vdots$&&$\vdots$&  \vspace{-1mm}\\
$s$ & $\cdots$ & 1 & $\cdots$ & 0 & $\cdots$ \vspace{-2mm} \\
 &&$\vdots$&&$\vdots$& \vspace{-1mm} \\
$t$ & $\cdots$ & 0 & $\cdots$ & 1 & $\cdots$ \vspace{-2mm} \\
 &&$\vdots$&&$\vdots$&\\
\cline{2-6}
\end{tabular}
\hspace{\stretch{1}}
\hspace{3mm}\raisebox{-2mm}{$\iff$} 
\hspace{\stretch{1}}
\begin{tabular}{c|*{5}{c@{\hspace{0.5mm}}}|}
\multicolumn{2}{c}{} & $x$ && $y$ & \multicolumn{1}{c}{} \\
\cline{2-6}
 &&$\vdots$&&$\vdots$&  \vspace{-1mm}\\
$s$ & $\cdots$ & 0 & $\cdots$ & 1 & $\cdots$ \vspace{-2mm} \\
 &&$\vdots$&&$\vdots$& \vspace{-1mm} \\
$t$ & $\cdots$ & 1 & $\cdots$ & 0 & $\cdots$ \vspace{-2mm} \\
 &&$\vdots$&&$\vdots$&\\
\cline{2-6}
\end{tabular}
\hspace{3mm}
\hspace{\stretch{1}}
\caption{A swap in a 0--1 matrix.}
\label{fig:swap}
\end{figure}

\begin{algorithm}
\caption{Swap}\label{alg:swap}
\begin{algorithmic}[1]
\Input Dataset $D$, num. of swap attempts $K$
\Output Randomized dataset $\hat{D}$
\State $\hat{D} \gets D$
\For{$i \gets 1$ to $K$}
\State Pick $s,t$ and $x,y$ such that $\hat{D}_{sx}=1$, $\hat{D}_{ty}=1$
\If{$\hat{D}_{sy}=0$ and $\hat{D}_{tx}=0$}
\State $\hat{D} \gets $ swapped version of $\hat{D}$
\EndIf
\EndFor
\State \Return $\hat{D}$
\end{algorithmic}
\end{algorithm}

\subsection{Preserving Clustering Structure}

 To obtain an algorithm for the problem \prname{ClusterMargins}, we modify Algorithm~\ref{alg:swap} to preserve the given clustering $\mathcal{C}$. At each step we pick a cluster $C\in\mathcal{C}$ and attempt a swap inside that cluster. When the rows $s,t$ belong to the same cluster, the cluster centers and variances do not change. The pseudocode of this approach is given in Algorithm~\ref{alg:cluster-swap}.

\begin{algorithm}
\caption{Cluster-Swap}\label{alg:cluster-swap}
\begin{algorithmic}[1]
\Input Dataset $D$, partition $\mathcal{C}$ of $D$, num. of swap attempts~$K$
\Output Randomized dataset $\hat{D}$
\State $\hat{D} \gets D$
\For{$i \gets 1$ to $K$}
\State Pick a cluster $C\in\mathcal{C}$
\State Pick $s,t\in C$ and $x,y$ such that $\hat{D}_{sx}=1$, $\hat{D}_{ty}=1$
\If{$\hat{D}_{sy}=0$ and $\hat{D}_{tx}=0$}
\State $\hat{D} \gets $ swapped version of $\hat{D}$
\EndIf
\EndFor
\State \Return $\hat{D}$
\end{algorithmic}
\end{algorithm}

\subsection{Preserving Itemset Frequencies}

As was mentioned in Section~\ref{sec:probdefs}, preserving itemset frequencies exactly in general is hard. Thus we give an algorithm for solving \prname{ItemsetMarginsSoft} where the itemset frequencies are preserved approximately. We use the Metropolis algorithm~\cite{Metropolis53} to produce random samples from the probability distribution 
\[
\pi(\hat{D}| D,\mathcal{F}) \propto \exp\{-w h_{\mathcal{F}}(D,\hat{D})\},
\]
where $h_{\mathcal{F}}$ is defined in Equation~\eqref{eq:diff-itemset-freqs}. At each step in the Metropolis algorithm, a proposal modification $D'$ of the current state $\hat{D}$ is formed. The proposal is accepted as the new state with a probability $\min(1,\pi(D')/\pi(\hat{D}))$. A direct implementation of the Metropolis algorithm with swaps is given in Algorithm~\ref{alg:itemset-swap}.

\begin{algorithm}
\caption{Itemset-Swap}\label{alg:itemset-swap}
\begin{algorithmic}[1]
\Input Dataset $D$, itemsets $\mathcal{F}$, num. of swap attempts $K$
\Output Randomized dataset $\hat{D}$
\State $\hat{D} \gets D$
\For{$i \gets 1$ to $K$}
\State Pick $s,t$ and $x,y$ such that $\hat{D}_{sx}=1$, $\hat{D}_{ty}=1$
\If{$\hat{D}_{sy}=0$ and $\hat{D}_{tx}=0$}
\State $D' \gets $ swapped version of $\hat{D}$
\State $a \gets$ Uniform(0,1)
\If{$a < \exp\{-w(h_{\mathcal{F}}(D,D')-h_{\mathcal{F}}(D,\hat{D}))\}$}
\State $\hat{D} \gets D'$
\EndIf
\EndIf
\EndFor
\State \Return $\hat{D}$
\end{algorithmic}
\end{algorithm}

The same approach can be used to solve the problem \prname{Soft\-Rand} in general. However, if too many characteristics are preserved at the same time, the chain may not mix well enough. In such cases parallel tempering can be used to overcome the problem~\cite{Geyer91}. In our experiments, we consider only a reasonable amount of small itemsets as constraints.

\section{Experiments}\label{sec:experiments}

In this section, we carry out several data mining experiments to demonstrate the use of our framework. We will first present a method to automatically analyze the convergence of the Markov chain. The method is then used in the subsequent experiments with two real data sets.

\subsection{Setup for the Experiments}

We use two real data sets: \dtname{Paleo} and \dtname{Courses}, whose basic characteristics are presented in Table~\ref{tab:datasets}. The \dtname{Paleo}\footnote{NOW public release 030717 available from~\cite{Fortelius05now}.} data contains paleontological absence/presence information about species in excavation sites~\cite{Fortelius05now}. The \dtname{Courses} data set records the courses individual students have taken in the Department of Computer Science in Helsinki University of Technology. 

\begin{table}
\begin{centering}
\begin{footnotesize} \begin{tabular}{lrrrr}
\toprule 
Data set & \# of rows & \# of cols & \# of 1's & density\\
\midrule
\dtname{Paleo} & 124 & 139 & 1978 & 11.48\%\\
\dtname{Courses} & 2405 & 5021 & 65152 & 0.54\%\\
\bottomrule
\end{tabular}\end{footnotesize} 
\par\end{centering}

\caption{Basic characteristics of the datasets\label{tab:datasets}.}

\end{table}

We used the method by Webb~\cite{Webb07} to ensure correct treatment of multiple hypothesis. We split both data sets randomly half on rows, and used the other part for mining frequent itemsets of size 2 and 3. The other part was used for randomization, i.e., $p$-value calculation.\footnote{We also tried not using Webb's method and use the complete data sets for mining itemsets as well as randomization. However, the results did not differ significantly.}

The \dtname{Paleo} data set was mined with minimum support 4, and \dtname{Courses} with 200. It turned out that only 41 columns of \dtname{Courses} are covered by the found frequent sets. We choose to ignore the columns of \dtname{Courses} that are not present in any of the frequent sets.  We chose to consider only the frequent sets of size 2 and 3, since it may be fairly easy to understand the co-occurrence of 2 or 3 variables, but understanding itemsets of larger size is increasingly difficult.

When calculating the $p$-values of itemsets, we use the support as the test statistic, where a higher value is more extreme, i.e., more interesting. We always sample 1000 matrices and use them for $p$-value calculation. The unadjusted, raw $p$-values are always adjusted for multiplicity with Benjamini-Hochberg method~\cite{Benjamini95}, which controls the false discovery rate (FDR). The level of FDR is set to 0.05, and thus, if the $p$-value of a pattern does not exceed this threshold value, the pattern is considered statistically significant. 

We also cluster the data sets in some of the experiments. We use the traditional $k$-means with 3 clusters for \dtname{Paleo} and 20 clusters for \dtname{Courses}. Since $k$-means may get stuck in local optima, we run $k$-means 10 times and use the clustering with the smallest error. When calculating the $p$-value of a clustering, our test statistic is the sum of the square of the $L_{2}$ distances between rows and their respective centroids. This is almost the same as optimized by $k$-means, but we use this slightly modified version since we know the \prname{ClusterMargins} maintains the value of this test statistic.

We use the Algorithm~\ref{alg:itemset-swap} to solve \prname{ItemsetMarginsSoft}, with $w=4$. That is, a swap which would increase the total error in itemset frequencies by one is accepted with probability $\exp(-4)\approx0.018$.

\subsection{Convergence Analysis}

In our experiments we used an automatic convergence analysis method that
is based on the distance between the original and the randomized matrix. 
As a distance we used the square of the Frobenius norm between these two
matrices~\cite{Ojala08sdm}. With 0--1 matrices, this is essentially the number of
cells where the two matrices differ. When swapping, this distance usually first
increases rapidly, until after some number of swaps it tends to settle
around a certain value. The intuition is that when the distance does not change
much, the Markov chain is considered to have converged. 

\looseness-1
We calculate the number of swap attempts $K$ needed as follows. We first assign to $K$
the amount of ones in the matrix, as it gives an indication of the number of
swaps possible in that matrix, given that the matrix is sparse. We then
randomize the original matrix separately 5 times with $K$ swaps and calculate
the mean of the matrix distances between the randomized data and the original
data. If the mean distance with the current $K$ has changed less than 1\%
with respect to the previous step, we consider the chain to have converged,
stop iterating, and use the final value of $K$ in the sampling. Otherwise, we
multiply $K$ by $2$ and repeat the step.
 
As an example, we run the automated convergence analysis on \dtname{Courses}
with 40 frequent sets of size 2 as constraints. The development of the matrix
distances is given in Figure~\ref{fig:convergence}. 

\begin{figure}
\begin{centering}
\includegraphics{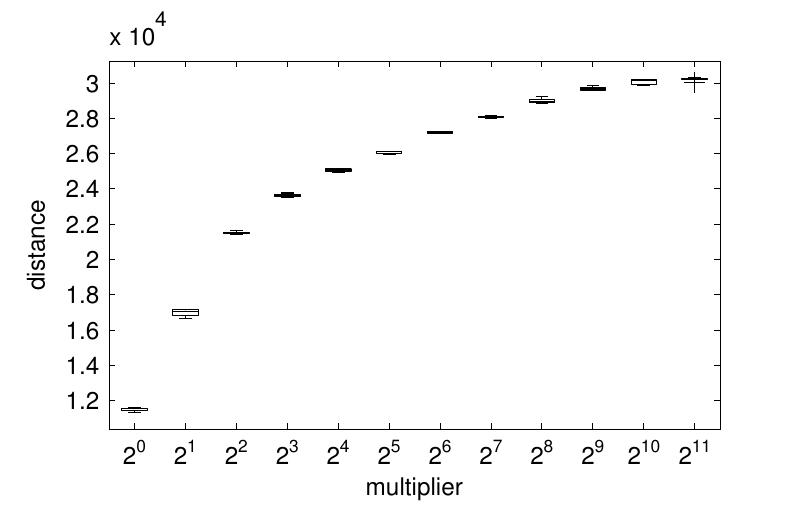}
\par\end{centering}
\caption{Boxplot of distances between the original matrix and 5 \prname{ItemsetMarginsSoft} randomized matrices of \dtname{Courses} data with different number of swap attempts. The x-axis depicts the multiplier to use with the number of ones in \prname{Courses} to get the number of swap attempts.\label{fig:convergence}}
\end{figure}

Note that since we use backward forward sampling, the chain is first traversed
backwards with $K$ swap attempts, and the resulting matrix is used as a basis
for further randomizations. Eventually, the number of swap attempts from the
original matrix to a matrix sample is $2K$.

\subsection{Significant Itemsets of Size 2}

In the first experiment, we are interested in the covariances between the columns when constraining the randomization in different ways. Therefore, we focus on all possible itemsets of size 2, and randomize with \prname{Margins}, \prname{ItemsetMarginsSoft} and \prname{ClusterMargins}. For the constraints of \prname{ItemsetMarginsSoft} we used itemsets of size $2$ containing adjacent items. Notice that we did not use Webb's method here since we consider all possible patterns.

The execution times to produce a single random matrix with different constraints and data sets are listed in Table~\ref{tab:times}.

\begin{table}
\begin{centering}
\begin{footnotesize} \begin{tabular}{lrrrr}
\toprule 
Data set & M & IMS & CM \\
\midrule
\dtname{Paleo} & 4.9 & 360 & 15 \\
\dtname{Courses} & 64 & 5200 & 200 \\
\bottomrule
\end{tabular}\end{footnotesize} 
\par\end{centering}

\caption{Average execution times in milliseconds to produce a single random matrix with \prname{Margins} (M), \prname{ItemsetMarginsSoft} (IM) and \prname{ClusterMargins} (CM). The implementations are written in Java and the randomizations run on a 2.5GHz machine.\label{tab:times}}

\end{table}

Table~\ref{tab:pairs_numbers} depicts several interesting contingency tables of the significance of itemsets in two randomizations at a time. We also measured the significance of the clustering result. Using the defined test statistic, the clustering was statistically significant in \prname{Margins} and \prname{ItemsetMarginsSoft}, but naturally not in \prname{ClusterMargins}.

\begin{table}
\begin{centering}
\begin{footnotesize}
\begin{tabular}{cc|cc}
\multicolumn{2}{l|}{\dtname{Paleo}} & \multicolumn{2}{c}{M} \\
&& N & S\\
\hline
\multirow{2}{*}{IM} & N & 8832 & 145\\
& S & 60 & 544\\
\end{tabular}
\hspace{7mm}
\begin{tabular}{cc|cc}
\multicolumn{2}{l|}{\dtname{Paleo}} & \multicolumn{2}{c}{CM} \\
&& N & S\\
\hline
\multirow{2}{*}{IM} & N & 8977 & 0\\
& S & 614 & 0\\
\end{tabular}\\
\vspace{7mm}
\begin{tabular}{cc|cc}
\multicolumn{2}{l|}{\dtname{Courses}} & \multicolumn{2}{c}{M} \\
&& N & S\\
\hline
\multirow{2}{*}{IM} & N & 402 & 47\\
& S & 41 & 330\\
\end{tabular}
\hspace{6.5mm}
\begin{tabular}{cc|cc}
\multicolumn{2}{l|}{\dtname{Courses}} & \multicolumn{2}{c}{CM} \\
&& N & S\\
\hline
\multirow{2}{*}{IM} & N & 437 & 6\\
& S & 326 & 51\\
\end{tabular}\hspace{.5mm}
\end{footnotesize} 
\par\end{centering}
\caption{Some of the contingency tables of the significance of itemsets in randomizations \prname{Margins} (M), \prname{ItemsetMarginsSoft} (IM) and \prname{ClusterMargins} (CM). S represents statistical significance and N the opposite. All pairs of columns in a data set were used as itemsets.\label{tab:pairs_numbers}}
\end{table}

When comparing the contingency table of \prname{ItemsetMarginsSoft} and \prname{Margins} in \dtname{Paleo}, it is evident that many of the itemsets were not significant in either randomization. Still, several itemsets were statistically significant in both cases. One example of such an itemset with two species is \emph{Brachyodus} and \emph{Bunolistriodon}. Both were contained in separate itemsets constrained in \prname{ItemsetMarginsSoft}. Apparently, having both species in separate itemset constraints did not require to maintain the support of their combined itemset, and hence, it was statistically significant.

Other interesting itemsets are the ones that were statistically significant in \prname{ItemsetMarginsSoft} but not in \prname{Margins}. One example of such itemset is \emph{Hyainailouros} and \emph{Amphimoschus}. Again both species were contained in separate itemset in the constraints, but their combined itemset was not. These constraints had a clear decreasing effect in the support of the itemset, and therefore, it was found significant when randomizing with \prname{ItemsetMarginsSoft}, but not with \prname{Margins}.

\looseness-1 
The contingency table between \prname{ClusterMargins} and \prname{ItemsetMarginsSoft} shows that no itemset was statistically significant when constraining the randomization with clustering. Even though the \dtname{Paleo} data set is known to have three clear clusters, the extent of this constraint was surprising. We expected to see at least few itemsets significant in \prname{ClusterMargins}. Evidently the clustering result fully explains the pairwise correlations of the species in the data set.
 
Similar results were found from \dtname{Courses}. An example of a statistically significant itemset in \prname{Margins} and \prname{ItemsetMarginsSoft} is \emph{Computer Organization} and \emph{Database Systems I}. This was expected, since 90\% of students who took \emph{Database Systems I} also took \emph{Computer Organization}.

One example of a course pair that was significant in \prname{ItemsetMarginsSoft} but not significant in \prname{Margins} is \emph{Models for Programming and Computing} and \emph{Introduction to the Use of Computers}. One possible reason for this is that this pair of courses is taken by very different kind of people. The itemsets of these groups introduce anti-correlation between the courses in the example, and therefore, that itemset becomes statistically significant in \prname{ItemsetMarginsSoft}.

The clustering results are again similar to the ones in \dtname{Paleo}, with the exception that now some itemsets are significant in \prname{ClusterMargins}. An example of a pair of courses that is significant in \prname{ItemsetMarginsSoft} and \prname{ClusterMargins} is \emph{Reading Comprehension in English} and \emph{English Oral Test}. These courses are not specific to any study programme in computer science, and therefore, are not affected by clustering. The clustering mostly separates students with respect to computer science courses, which makes these general courses poorly clustered. However, this allows them to be statistically significant even when constraining the randomization with the clustering.

\subsection{Using Significant Itemsets as Constraints}

The motivation of the second experiment is to simulate an actual data mining scenario, where the patterns are discovered by a data mining algorithm and these are then used to assess which are significant in different randomization scenarios. Essentially, the idea is to find out the possible interrelations between itemsets and clustering. 

We focus on frequent sets of size 2 and 3 mined from the used data sets. Here we use again the same randomizations \prname{Margins}, \prname{ItemsetMarginsSoft} and \prname{ClusterMargins}, but with different constraints for \prname{ItemsetMarginsSoft}.

The first set of constraints in \prname{ItemsetMarginsSoft} is constructed by adding the 40 most significant itemsets from \prname{Margins}, i.e., the ones with the smallest $p$-values. The interest is how the significance of the other itemsets change when constraining with the most significant itemsets.

The second set of itemset constraints is used for \prname{ItemsetMarginsSoft}, but now the set contains the itemsets that had the largest increase in their $p$-value from \prname{Margins} to \prname{ClusterMargins}. The intuition is that these itemsets may explain the clustering since they were explained by it. We use ICM for short for this randomization.

Table~\ref{tab:sig_numbers} depicts a few interesting contingency tables of the significance of itemsets in two randomizations at a time.  The clustering was statistically significant in \prname{Margins}, \prname{ItemsetMarginsSoft} and ICM, but not in \prname{ClusterMargins}.

\begin{table}
\begin{centering}
\begin{footnotesize}
\begin{tabular}{cc|cc}
\multicolumn{2}{l|}{\dtname{Paleo}} & \multicolumn{2}{c}{CM} \\
&& N & S\\
\hline
\multirow{2}{*}{IM} & N & 1882 & 0\\
& S & 687 & 0\\
\end{tabular}
\hspace{7mm}
\begin{tabular}{cc|cc}
\multicolumn{2}{l|}{\dtname{Paleo}} & \multicolumn{2}{c}{ICM} \\
&& N & S\\
\hline
\multirow{2}{*}{IM} & N & 1572 & 306\\
& S & 162 & 535\\
\end{tabular}\\
\vspace{7mm}
\begin{tabular}{cc|cc}
\multicolumn{2}{l|}{\dtname{Courses}} & \multicolumn{2}{c}{CM} \\
&& N & S\\
\hline
\multirow{2}{*}{IM} & N & 297 & 13\\
& S & 809 & 146 \\
\end{tabular}
\hspace{6.5mm}
\begin{tabular}{cc|cc}
\multicolumn{2}{l|}{\dtname{Courses}} & \multicolumn{2}{c}{ICM} \\
&& N & S\\
\hline
\multirow{2}{*}{CM} & N & 265 & 841\\
& S & 2 & 157\\
\end{tabular}\hspace{.5mm}
\end{footnotesize} 
\par\end{centering}
\caption{Some of the contingency tables of the significance of itemsets in randomizations \prname{Margins} (M), \prname{ItemsetMarginsSoft} (IM), \prname{ClusterMargins} (CM) and \prname{ItemsetMarginsSoft} constrained by itemsets found using \prname{ClusterMargins} (ICM). S represents statistical significance and N the opposite. Frequent itemsets of size 2 and 3 were used as itemsets.\label{tab:sig_numbers}}
\end{table}

The \dtname{Paleo} results between \prname{ItemsetMarginsSoft} and \prname{ClusterMargins}  are very similar to the previous section. None of the itemsets were significant in \prname{ClusterMargins}, and a portion was significant in \prname{ItemsetMarginsSoft}. The strength of the clustering result clearly also affect the itemsets of size 3.  
 
The contingency table between ICM and \prname{ItemsetMarginsSoft} displays the fact that neither of the constraint sets completely explain the data since roughly 500 itemsets were significant in both, and few hundred in either. Still, 60\% of the itemsets were explained by both constraint sets along with the margins. However, 1419 itemsets were never significant even with \prname{Margins}, which means that the expressive power of both constraint sets is very limited.

\prname{ClusterMargins} is very different from ICM. Although we tried to construct the set of itemsets to constrain in ICM in such a way that the results would be the same as with \prname{ClusterMargins}, the numbers of significant itemsets do not show this. 

The results for \dtname{Courses} display similar behavior between \prname{ItemsetMarginsSoft} and \prname{ClusterMargins} as in previous experiment, and different from \dtname{Paleo}. Clearly, the clustering of \dtname{Courses} does not describe the data as well as for \dtname{Paleo}. One example of an itemset of courses significant in both is \emph{Programming Project} and \emph{Programming in Java}. Evidently, these have much in common and are most likely taken both instead of just either of them. 

The comparison between \prname{ClusterMargins} and ICM expresses the same as with \prname{ClusterMargins} and \prname{ItemsetMarginsSoft}. We can conclude also here that ICM with these constraints is not as strict, and conversely, does not explain as much, as \prname{ClusterMargins}. An example itemset significant in ICM but not in \prname{ClusterMargins} is \emph{Information Systems Project}, \emph{Programming Project} and \emph{Data Communications}. The last two was an itemset constrained in ICM, and the first was also a part of a separate constraint. 

This itemset had high frequency in some clusters but very low in others. Because of this, the randomization had little room to break the itemset in \prname{ClusterMargins}. We conjecture that these courses are most likely all required in some study programme. Additionally, \emph{Programming Project} and \emph{Data Communications} together did not explain the occurrence of the triplet, and therefore it was significant in ICM. 

\subsection{Discovering Significant Itemsets Iteratively}

In our final experiment we conduct an iterative data mining process in which
the itemsets are added iteratively to constraints. We used again the frequent
sets of size 2 and 3. 

\looseness-1
First, the data is randomized with \prname{Margins} and the itemset with the smallest $p$-value is inserted to the set of itemset constraints. The data is then randomized at each iteration with \prname{ItemsetMarginsSoft} and always the itemset with the smallest $p$-value is added to the set of constraints. This resembles the situation where the user iteratively tries to understand one pattern at a time and 
wants to find which
patterns are not explained by the already understood patterns. 

We carried out a total of 10 iterations. Figure~\ref{fig:iter_numbers} displays the number of itemsets found significant at each iteration, including the initial \prname{Margins} randomization.

\begin{figure}
\begin{centering}
\raisebox{1cm}{\rotatebox{90}{\fontfamily{phv}\selectfont{\scriptsize\# significant patterns}}} 
\hspace{1mm}
\subfigure[\dtname{Paleo}]{\includegraphics{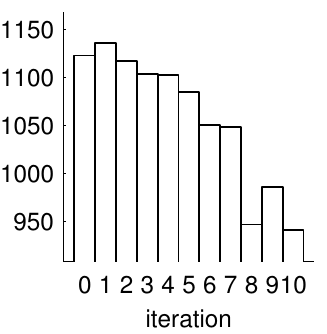}}%
\hspace{3mm}
\subfigure[\dtname{Courses}]{\includegraphics{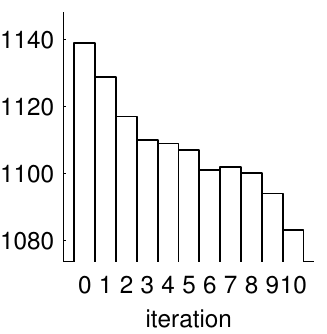}}
\par\end{centering}
\caption{The number of significant itemsets at each iteration. Notice the different vertical scale and that the lower parts of the bars have been cropped to better see the difference between iterations.\label{fig:iter_numbers}}
\end{figure}

The results follow almost with no fail the intuition that when constraints are added, the number of significant itemsets decreases. However, this may not always be the case, as seen in the previous experiments. Sometimes adding constraints increases the statistical significance of some patterns by introducing anti-correlation. Still, the intuitive results may be due to how the itemset constraints were selected. At any time, the constrained itemsets do not explain the itemsets found statistically significant. However, the significant itemsets are still likely to have some correlation between them, and adding one of them to the set of constraints will restrict the rest. Selecting itemsets in another fashion may produce very different results.

\section{Conclusions}\label{sec:conclusions}

Our focus in the paper was to study the concept of iterative data
mining. The idea behind this approach is the question whether the
results of one analysis explains or implies the results of another
analysis. This approach can be then refined into iterative data mining
process in which the user iteratively selects interesting patterns or
models that are then used for updating the significance of the rest of
the patterns or models. 

Our approach is to produce random data sets having the same selected
statistics as the original dataset. As constraining statistics we used
row and column margins, itemsets, and clustering structure. Using these
random data sets we computed empirical $p$-values for our test
statistics. Our experiments demonstrated that our method works in
practice. 
 

\end{document}